\documentclass[runningheads]{llncs}

\usepackage[utf8]{inputenc}
\usepackage[english]{babel}
\usepackage{amsmath,amsbsy,bm,amssymb,mathtools,dsfont,etoolbox}
\usepackage{xspace,geometry,graphicx,float}
\usepackage[hidelinks]{hyperref}
\usepackage{cleveref}
\usepackage{enumitem}
\usepackage{siunitx}
\usepackage{caption,subcaption}
\usepackage{booktabs}
\usepackage{scalerel}

\crefname{figure}{Fig.}{Figs.}
\Crefname{figure}{Fig.}{Figs.}
\crefname{table}{Tab.}{Tabs.}
\Crefname{table}{Tab.}{Tabs.}
\crefname{equation}{}{}
\Crefname{equation}{}{}
\crefname{section}{Sec.}{Secs.}
\Crefname{section}{Sec.}{Secs.}
\crefname{subsection}{Sec.}{Secs.}
\Crefname{subsection}{Sec.}{Secs.}
\crefname{definition}{definition}{definitions}
\Crefname{definition}{Definition}{Definitions}

\bibliographystyle{unsrt}

\newcommand{\ie}{i.\,e.\@\xspace}
\newcommand{\eg}{e.\,g.\@\xspace}
\newcommand{\cf}{cf.\@\xspace}
\newcommand{\iid}{i.i.d.\@\xspace}
\newcommand{\wrt}{w.r.t.\@\xspace}

\newcommand{\ind}[1]{\mathbb{I}\lbrace #1 \rbrace}
\newcommand{\BP}{\mathbb{P}}
\newcommand{\BQ}{\mathbb{Q}}
\newcommand{\BH}{\mathbb{H}}
\newcommand{\BG}{\mathbb{G}}
\newcommand{\proofqed}{\hfill$\Box$}


\renewcommand{\orcidID}[1]{}

\begin{document}

\title{Shapley Values with Uncertain Value Functions}
\titlerunning{Uncertain Shapley Values}
\author{Raoul Heese\inst{1}\orcidID{0000-0001-7479-3339} \and
	Sascha Mücke\inst{2}\orcidID{0000-0001-8332-6169} \and
    Matthias Jakobs\inst{2}\orcidID{0000-0003-4607-8957} \and
	Thore Gerlach\inst{3}\orcidID{0000-0001-7726-1848} \and
	Nico Piatkowski\inst{3}\orcidID{0000-0002-6334-8042}}
\authorrunning{R. Heese et al.}
\institute{Fraunhofer ITWM \email{raoul.heese@itwm.fraunhofer.de} \and TU Dortmund \and Fraunhofer IAIS}
\maketitle

\begin{abstract}
We propose a novel definition of Shapley values with uncertain value functions based on first principles using probability theory. Such uncertain value functions can arise in the context of explainable machine learning as a result of non-deterministic algorithms. We show that random effects can in fact be absorbed into a Shapley value with a noiseless but shifted value function. Hence, Shapley values with uncertain value functions can be used in analogy to regular Shapley values. However, their reliable evaluation typically requires more computational effort.
\keywords{Shapley Values \and Uncertainty \and Explainable Machine Learning \and Game Theory}
\end{abstract}

\section{Introduction}
The ability to interpret the predictions of machine learning (ML) models is an important requirement for many data-driven decision support applications, \eg, in computational biology \cite{watson2022}, medicine \cite{amann2020}, materials science \cite{zhong2022}, and finance \cite{salvatore2022}, just to name a few. There are a variety of model-agnostic methods that enable explainability of otherwise black-box models \cite{molnar2022}. The theory of Shapley values (SVs) \cite{shapley1953}, a concept from coalitional game theory, builds the foundation for a collection of such methods \cite{gromping2007,strumbelj2010,strumbelj2014,lundberg2017,merrick2019,sundararajan2020}. They all have in common that a SV---in the sense of an importance score---is attributed to each feature of an arbitrary predictive model. Based on these scores, explanations can be made about which features are responsible for which prediction. For a recent review of SVs in explainable ML (XML), see, \eg, \cite{belle2021,rozemberczki2022} and references therein.\par
SVs for XML are based on the premise of a value function (VF) that quantifies the impact of each feature to a model's prediction by a real number. There are various approaches to define suitable VFs for such a task \cite{sundararajan2020}, but their evaluation typically involves a statistical or randomized (\ie, non-deterministic) ingredient. For example, a straightforward approach for a classifier is to retrain the model for all possible feature combinations and use its prediction of class probabilities to specify the VF \cite{strumbelj2009}. However, the training procedure os often a randomized algorithm, \eg, due to random initializations, data splitting, or randomized search heuristics---leading to different outcomes if repeated with different random seeds. As a result, the corresponding VF becomes a random variable that yields different outcomes if evaluated multiple times.\par
The effect of uncertain VFs for SVs in the context of XML has not been extensively studied in the literature before. To our knowledge, \cite{li2020}, which is based on the \emph{DeepSHAP} approximation \cite{shrikumar2017,lundberg2017}, is the only work in this direction. Therein, a purely empirical perspective on sampling uncertainty is studied without any explicit sources of randomness. More generally, in \cite{kargin2005,fatima2006}, the authors define the uncertainty of coalition games as the standard deviation when considering the marginal contributions of each player entering a random coalition, which can also be understood as a measure of the strategic risk. An important insight is that whenever two players have the same SVs, they do not necessarily share the same risk. However, no randomness from the VF is considered in these works. In \cite{gao2017,dai2022}, the authors study uncertain coalition games \cite{yang2013} based on uncertainty theory \cite{liu2010} and introduce risk averse SVs. A brief review of related work can also be found in \cite{dai2022}. Since the calculus of uncertainty theory is based on special assumptions, the results are not straightforwardly applicable to the uncertainties that arise for SVs in the context of XML. In conclusion, there is
no self-consistent work on SVs with uncertain VFs based on probability theory.\par
We think, however, that the informative value of explainability via SVs can only be fairly judged if it incorporates a sound discussion of such uncertainties. As stated in \cite{hart1989}, ``it should be emphasized that the value of a game is an a priori measure---before the game is actually played.'' And indeed, SVs can only provide an expectation value of the feature importance. Especially in this light, it seems almost natural to ask the question what influence uncertainty has on the underlying probability density of this expected value and what happens to the higher moments. In this manuscript, we try to get a little closer to answering such questions.\par
Specifically, we present two major contributions:
\begin{enumerate}
	\item In a novel approach, we define SVs with uncertain VFs based on first principles using probability theory.
	\item We show that SVs with uncertain VFs correspond to regular SVs with shifted (deterministic) VFs, where the shift is determined by the mean bias of the marginal contributions. Our proposed definition consequently fulfills all properties of regular SVs. Furthermore, SVs with uncertain VFs can be used in analogy to regular SVs but with a higher computational effort that depends on the desired confidence.
\end{enumerate}
The remaining part of this manuscript is structured as follows. In \cref{sec:shapley values}, we review regular SVs (without uncertainty) from a mostly general perspective. Subsequently, we introduce our definition of SVs with uncertain VFs in \cref{sec:uncertain shapley values} and explain their properties. In \cref{sec:xml}, we briefly discuss the implications for XML and show a numerical experiment. Finally, we conclude with a brief summary and outlook in \cref{sec:summary}.
Throughout this paper, uncertain quantities are denoted in boldface. Moreover, lowercase letters denote a probability density that corresponds to its respective probability measure (\eg, $p$ and $\BP$).

\section{Shapley Values} \label{sec:shapley values}
SVs, which originated in game theory \cite{shapley1953,aumann1974}, are based on the following premise: A coalition (or group) of players cooperates in what is called a coalition game and achieves a certain profit from this cooperation. The question that SVs try to answer is to evaluate the individual contribution of each player to the overall outcome. In other words, SVs represent an approach to attribute the total gains of a coalition game to each individual of a group of participating players. In this section, we start with a formal definition of SVs and then discuss some of their properties.

\subsection{Definition}
A coalition game is in this context defined by the VF (also called characteristic function), which specifies the payoff of the game as a real number under the presumption that only a subset of players participate:
\begin{definition}[\cf~\cite{shapley1953}]
    Let $S \subseteq \mathcal{S} := \{ 1, \dots, N \}$ be a coalition of $N$ players. A function 
    \begin{equation} \label{eqn:v}
    	v : S \rightarrow \mathbb{R}
    \end{equation}
	that assigns coalitions to profit is called a value function (VF).
\end{definition}
In total, there are $2^N$ possible coalitions including the empty set.\par
The individual contribution of each player to the game is represented by the associated SV:
\begin{definition}[\cf~\cite{shapley1953}]
    Given a player $i \in \{ 1, \dots, N \}$ and a VF $v$, the corresponding Shapley value (SV)
    \begin{equation} \label{eqn:phi}
        \Phi_i(v) \equiv \Phi_i := \sum_{S \subseteq \mathcal{S} \setminus \{i\}} w(S) \Delta_i v (S) 
    \end{equation}
	is a weighted sum over its marginal contributions
	\begin{equation} \label{eqn:deltav}
		\Delta_i v (S) := v(S \cup \{i\}) - v(S)
	\end{equation}
	when added to all coalitions of other players. The corresponding weights
	\begin{equation} \label{eqn:w}
		w(S) \equiv w(\vert S \vert) := \frac{1}{N} {N-1 \choose \vert S \vert}^{-1} = \frac{\vert S \vert ! \left( N  - \vert S \vert - 1 \right) !}{N !}
	\end{equation}
	depend on the cardinality of each coalition. 
\end{definition}
By default, the VF is typically normalized such that $v(\varnothing) := 0$, where $\varnothing$ denotes the empty set.
This normalization can be achieved by a linear shift $v(S) \mapsto v(S) - v(\varnothing)$ in \cref{eqn:v}.
However, since the VF enters twice in \cref{eqn:phi} in form of a subtraction, such a shift has no effect on $\Phi_i$.
Hence, we do not presume this kind of normalization in the following.

\subsection{Properties}
SVs exhibit many desirable properties, for example:
\begin{description}[font=\bfseries, leftmargin=25mm, style=nextline]
	\item[Efficiency]
	$\sum_{i=1}^N \Phi_i = v(\mathcal{S}) - v(\varnothing)$.
	\item[Symmetry]
	If $\forall\, S \subseteq \mathcal{S} \setminus \{i,j\}$, $v(S \cup \{i\}) = v(S \cup \{j\})$, then $\Phi_i=\Phi_j$.
	\item[Linearity]
	For two VFs $v$ and $v'$, $\Phi_i(v) + \Phi_i(v') = \Phi_i(v + v') \,\forall\, i \in \mathcal{S}$. For $\alpha \in \mathbb{R}$, $\alpha \Phi_i(v) = \Phi_i(\alpha v) \,\forall\, i \in \mathcal{S}$.
	\item[Null Player]
	If $\forall\, S \subseteq \mathcal{S} \setminus \{i\}$, $v(S \cup \{i\}) = v(S)$, then $\Phi_i = 0$.
\end{description}
In fact, it can be shown that SVs represent the unique attribution method that satisfies these four properties simultaneously \cite{dubey1975}.

\subsection{Probabilistic View}
The sum of the weights from \cref{eqn:w} over all possible coalitions is normalized according to $\sum_{S \subseteq \mathcal{S} \setminus \{i\}} w(S) = 1$, which allows us to view the coalitions in \cref{eqn:phi} as a random variable $\bm{S} \sim \BP$ over the set $\mathcal{S}\setminus\lbrace i\rbrace$ with probability mass function 
\begin{equation} \label{eqn:Pw}
	\BP(S) := \BP(\bm{S}=S) = w(S).
\end{equation}
This approach, in turn, also makes the marginal contributions from \cref{eqn:deltav} a random variable, that is, $\bm{V}_i := \Delta_i v(\bm{S}) \sim \BP_i$ with probability mass function
\begin{equation} \label{eqn:deltav:p}
	\BP_i(u) := \BP_i(\bm{V}_i=u) = \sum_{S \subseteq \mathcal{S} \setminus \{i\}} \BP(S) \cdot\ind{\Delta_i v(S)=u},
\end{equation}
where the indicator function $\ind{\Delta_i v(S)=u} \in \{0,1\}$ is $1$ if $\Delta_i v(S)=u$ and $0$ otherwise. By construction, $\bm{V}_i$ has only finite support on the discrete set $\Delta_i\mathcal{V} := \lbrace \Delta_i v(S) ~|~ S\subseteq\mathcal{S}\setminus\lbrace i\rbrace\rbrace$. 
These random variables allow a different perspective on \cref{eqn:phi} in form of expectation values. Specifically,
\begin{equation} \label{eqn:phi:E}
	\Phi_i = \mathbb{E} \left[\bm{V}_i \right] = \mathbb{E} \left[ \Delta_i v(\bm{S}) \right].
\end{equation}
Hence, the determination of $\Phi_i$ can also be understood as estimating one of the corresponding expectation values.

\subsection{Higher Moments}
Since we have full knowledge about the probability mass function $\BP_i$, we can also evaluate higher moments. Specifically, the $n$th moment reads
\begin{equation} \label{eqn:phi:moments}
	\mathbb{E} \left[ \bm{V}_i^n \right] = \sum_{S \subseteq \mathcal{S} \setminus \{i\}} w(S) \left[ \Delta_i v(S) \right]^n = \mathbb{E} \left[ \Delta_i v(\bm{S})^n \right]
\end{equation}
for $n \in \mathbb{N}$. The first and second moments can be used to calculate the variance \cite{kargin2005,fatima2006}
\begin{equation} \label{eqn:phi:V}
	\sigma_{i}^2 := \mathbb{E} \left[ \bm{V}_i^2 \right] - \mathbb{E} \left[ \bm{V}_i \right]^2 = \mathbb{E} \left[ \Delta_i v(\bm{S})^2 \right] - \Phi_i^2.
\end{equation}
The variance (or the corresponding standard deviation $\sigma_{i}$) is a measure for the dispersion of the marginal contributions.\par 
Therefore, we consider \cref{eqn:phi:V} as a measure of the \emph{intrinsic} uncertainty of $\Phi_i$. With ``intrinsic'' we refer to the fact that the VF $v(S)$ is (so far) presumed to be deterministic such that an uncertainty of $\Phi_i$ can only arise from the distribution of marginal contributions $\bm{V}_i$ that is determined by the choice of $v(S)$. In the next section, we discuss uncertain VFs and their implication for SVs.

\section{Uncertain Shapley Values} \label{sec:uncertain shapley values}
The evaluation of a VF that is determined by a randomized procedure introduces uncertainty. In typical ML pipelines, randomization may occur as part of the learning algorithm (\eg, dropout \cite{srivastava2014}), and by the random choice of the training data. A plethora of methods for quantifying and mitigating uncertainty has been studied extensively in ML under the umbrella of Bayesian methods. Surprisingly, such methods are not well-studied in the context of SVs, as described in the introduction.\par
In this section, we present our definition of SVs with uncertain VFs, or \emph{uncertain SVs} for short. First, we introduce uncertain VFs as random variables that lead to a probability distribution with an expectation value corresponding to the uncertain SVs. Next, we discuss various properties that arise from this definition.

\subsection{Uncertain Value Functions}
We presume that the VF $v (S)$ in \cref{eqn:phi} is replaced by an uncertain VF in form of a random variable:
\begin{definition}
	Let $S \subseteq \mathcal{S} := \{ 1, \dots, N \}$ be a coalition of $N$ players. The random variable
	\begin{equation} \label{eqn:v:uncertainty}
		{\boldsymbol v} (S) \equiv \boldsymbol v := v (S) + \boldsymbol\nu(S)
	\end{equation}
	that consists of the sum of a VF $v (S)$ as defined in \cref{eqn:phi} and a random variable $\boldsymbol\nu(S) \equiv \boldsymbol\nu \sim \BQ(\cdot~|~S)$ that represents uncertainty or noise in the determination of $v(S)$ is called an uncertain VF. The associated (conditional) probability measure $\BQ(\cdot~|~S)$ has real support but is not constrained by additional assumptions.
\end{definition}
Our definition is very generic and covers both aleatoric uncertainty (from stochastic errors) and epistemic uncertainy (from systematic errors).
Based on this new kind of VF, the marginal contributions $\Delta_i v (S)$, \cref{eqn:phi}, can be replaced according to $\Delta_i v (S) \rightarrow \Delta_i {\boldsymbol v} (S)$ by the marginal contribution under uncertainty $\Delta_i {\boldsymbol v} (S) := \Delta_i v (S) + \boldsymbol\epsilon_i (S)$, where $\boldsymbol\epsilon_i (S) \equiv \boldsymbol\epsilon_i := \boldsymbol\nu(S \cup \{i\}) - \boldsymbol\nu(S) \sim \BH_i (\cdot~|~S)$ denotes a random variable with density 
\begin{equation} \label{eqn:deltav:q} 
	h_i (\epsilon~|~S) := \int_{\mathbb{R}} q(\nu~|~S \cup \{i\}) \ q(\nu - \epsilon~|~S) \, \mathrm d \nu,
\end{equation}
which corresponds to a convolution.

\subsection{Probabilistic View}
With these presumptions, we arrive at the random variable
\begin{equation} \label{eqn:V:uncertainty}
	\tilde{\bm{V}}_i:=\bm{V}_i+\bm \epsilon_i  = \Delta v_i (\bm{S}) + \boldsymbol\nu(\bm{S} \cup \{i\}) - \boldsymbol\nu(\bm{S})
\end{equation}
with $\bm{S}_i \sim \BP$ and $\boldsymbol\nu(S) \equiv \boldsymbol\nu \sim \BQ(\cdot~|~S)$, respectively. In particular, the discrete measure $\BP_i$, \cref{eqn:deltav:p}, becomes continuous such that $\tilde{\bm{V}}_i \sim \tilde{\BP}_i$ with
\begin{equation} \label{eqn:deltav:p:uncertainty} 
	\tilde{p}_i(u) = \sum_{S \subseteq \mathcal{S} \setminus \{i\}} \BP(S) \ h_i (u - \Delta_i v(S)~|~S) .
\end{equation}
This expression can also be rewritten as $\tilde{p}_i(u) = \sum_{S \subseteq \mathcal{S} \setminus \{i\}} w(S) \int_{\mathbb{R}} h_i(\epsilon~|~S) \delta(\Delta_i v(S)+\epsilon-u) \, \mathrm d \epsilon$ to highlight the similarity between $\tilde{p}_i$ and its noiseless analogon $\BP_i$ from \cref{eqn:deltav:p}. Specifically, the indicator function in \cref{eqn:deltav:p} translates to the Dirac delta function $\delta$ under an integral over the probability density $h_i$.\par
These preliminary considerations allow us to define uncertain SVs in analogy to \cref{eqn:phi:E}:
\begin{subequations} \label{eqn:phi:E:uncertainty}
\begin{definition}
	Given a player $i \in \{ 1, \dots, N \}$ and an uncertain VF ${\boldsymbol v} (S)$ as defined in \cref{eqn:v:uncertainty}, the corresponding uncertain SV
	\begin{equation} \label{eqn:phi:E:uncertainty:VS}
		\tilde{\Phi}_i(\boldsymbol v) \equiv \tilde{\Phi}_i := \mathbb{E} \left[ \tilde{\bm{V}}_i \right]
	\end{equation}
	is the expectation value of a random variable $\tilde{\bm{V}}_i$ as given by \cref{eqn:V:uncertainty,eqn:deltav:p:uncertainty}.
\end{definition}
The expression \cref{eqn:phi:E:uncertainty:VS} can be rewritten as
\begin{equation} \label{eqn:phi:E:uncertainty:V}
	\tilde{\Phi}_i = \sum_{S \subseteq \mathcal{S} \setminus \{i\}} w(S) \Delta_i^{\epsilon} \bm{v} (S) \quad \text{with} \quad \Delta_i^{\epsilon} \bm{v} (S) := \Delta_i v (S) + \mathbb{E} \left[ \bm{\epsilon}_i~|~S\right]
\end{equation}
by exploiting the linearity of the expectation value. These two forms of $\tilde{\Phi}_i$ highlight different perspectives, similar to \cref{eqn:phi,eqn:phi:E}. In \cref{eqn:phi:E:uncertainty:VS}, two random variables $\bm{V}_i$ (from the sum over coalitions) and $\bm \epsilon_i$ (from uncertain VFs) contribute to the expectation value. On the other hand, in \cref{eqn:phi:E:uncertainty:V} only one random variable $\bm \epsilon_i$ is considered. This form can also be found in \cite{gao2017}. A third representation
\begin{equation} \label{eqn:phi:E:uncertainty:G}
	\tilde{\Phi}_i = \Phi_i + \Gamma_i \quad \text{with} \quad \Gamma_i(\boldsymbol v) \equiv \Gamma_i(\boldsymbol \nu) \equiv \Gamma_i := \sum_{S \subseteq \mathcal{S} \setminus \{i\}} \BP(S) \mathbb{E} \left[ \bm{\epsilon}_i~|~S\right] = \mathbb{E} \left[ \bm{\epsilon}_i (\bm{S}) \right]
\end{equation}
\end{subequations}
can be obtained from \cref{eqn:phi:E:uncertainty:V} if we recall $\Phi_i$ from \cref{eqn:phi}. This representation particularly highlights that the difference between uncertain SVs and regular (\ie, noiseless) SVs is given by $\Gamma_i$ as an expectation value of $\bm{\epsilon}_i (\bm{S}) \sim \BG_i$ with ${g}_i(\epsilon) = \sum_{S \subseteq \mathcal{S} \setminus \{i\}} \BP(S) \ h_i (\epsilon ~|~S)$.\par
By definition, the mean of the marginal contribution noise $\mathbb{E} \left[ \bm{\epsilon}_i~|~S\right] = \mathbb{E} \left[ \boldsymbol\nu(S \cup \{i\}) ~|~S\right] - \mathbb{E} \left[ \boldsymbol\nu(S)~|~S\right]$ covers the noise-induced expectation value shift when adding a player $i$ to the coalition $S$. Hence, $\Gamma_i$ can be considered as the mean bias of the marginal contributions of player $i$. If the expectation value of the noise is independent of $S$ (\eg, because $h_i$ is independent of $S$), $\Gamma_i$ vanishes.\par
Summarized, we have found three representations with uncertain SVs, \cref{eqn:phi:E:uncertainty:VS,eqn:phi:E:uncertainty:V,eqn:phi:E:uncertainty:G}. All of them are identical, but correspond to different interpretations as we have explained. In the borderline case of vanishing uncertainty (\ie, $\bm \nu(S) \equiv 0$), the VF becomes deterministic again since $q(\nu\ |\ S) = \delta(\nu)$, which leads to $h_i (\epsilon\ |\ S) = \delta(\epsilon)$ and therefore $\Gamma_i = 0$. Accordingly, the uncertain SV reduces to the regular SV, as expected.

\subsection{Properties}
\label{sec:uncertain_properties}
Our previous considerations allow us to establish a direct connection between uncertain SVs and regular SVs:
\begin{theorem} \label{theorem1}
	Let $\mathbf v(S) = v(S) + \boldsymbol \nu(S)$ be an uncertain VF in the sense of \cref{eqn:v:uncertainty} with a deterministic part $v(S)$ and a random part $\boldsymbol \nu(S)$. We denote the corresponding uncertain SV, \cref{eqn:phi:E:uncertainty}, by $\tilde{\Phi}_i(\bm v)$. Then, $\tilde{\Phi}_i(\bm v) = \Phi_i(v')$ for all $i \in \mathcal{S}$, where $\Phi_i(v')$ denotes the regular SV, \cref{eqn:phi}, with respect to a deterministic VF $v'(S) := v(S) + \gamma(S)$, where $\gamma(S) := \sum_{j \in S} \Gamma_j$ is based on $\Gamma_j$ from \cref{eqn:phi:E:uncertainty:G}.
\end{theorem}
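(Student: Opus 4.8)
The plan is to reduce the statement to a single identity about the deterministic ``bias game'' $\gamma$, and then verify that identity by a direct marginal-contribution computation. The starting point: from the third representation of the uncertain SV, \cref{eqn:phi:E:uncertainty:G}, we already know that $\tilde{\Phi}_i(\bm v) = \Phi_i(v) + \Gamma_i$. On the other side, since $v' = v + \gamma$ with both $v$ and $\gamma$ deterministic VFs, the Linearity property of regular SVs gives $\Phi_i(v') = \Phi_i(v) + \Phi_i(\gamma)$. Subtracting, the theorem becomes equivalent to the claim $\Phi_i(\gamma) = \Gamma_i$ for every $i \in \mathcal{S}$, so the whole proof comes down to computing the regular SV of $\gamma(S) = \sum_{j\in S}\Gamma_j$.

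First I would compute the marginal contributions of $\gamma$. For any $i$ and any $S \subseteq \mathcal{S}\setminus\{i\}$ (so $i\notin S$), \cref{eqn:deltav} gives $\Delta_i\gamma(S) = \gamma(S\cup\{i\}) - \gamma(S) = \sum_{j\in S\cup\{i\}}\Gamma_j - \sum_{j\in S}\Gamma_j = \Gamma_i$, i.e.\ the marginal contribution is the \emph{constant} $\Gamma_i$, independent of $S$. This is the one structural observation the argument hinges on: $\gamma$ is an additive (modular) set function, so adjoining player $i$ always contributes exactly $\Gamma_i$. Plugging into \cref{eqn:phi} and using that the Shapley weights are normalized, $\sum_{S\subseteq\mathcal{S}\setminus\{i\}} w(S) = 1$, we get $\Phi_i(\gamma) = \sum_{S} w(S)\,\Gamma_i = \Gamma_i$, which is exactly the required identity. (Equivalently, one may write $\gamma = \sum_j \Gamma_j\, u_j$ with $u_j(S) = \ind{j\in S}$, check $\Phi_i(u_j) = \delta_{ij}$, and invoke Linearity once more; but the direct route above avoids even that.)

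Finally I would assemble the pieces: $\Phi_i(v') = \Phi_i(v) + \Phi_i(\gamma) = \Phi_i(v) + \Gamma_i = \tilde{\Phi}_i(\bm v)$ for all $i \in \mathcal{S}$, which is the claim. I do not expect a genuine obstacle here; the only points requiring care are (i) that $v'$ is genuinely deterministic — it is, since each $\Gamma_j$ is a fixed real number (an expectation value) — so that the \emph{regular} SV and its Linearity property legitimately apply, and (ii) keeping the bookkeeping straight between the coalition $S$ in the Shapley sum (which must exclude $i$) and the index set in the definition of $\gamma$ (which ranges over the members of the coalition). Everything else is routine.
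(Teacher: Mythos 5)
Your proof is correct and follows essentially the same route as the paper's: decompose via $\tilde{\Phi}_i(\bm v) = \Phi_i(v) + \Gamma_i$ from \cref{eqn:phi:E:uncertainty:G}, apply Linearity of regular SVs, and reduce everything to the identity $\Phi_i(\gamma) = \Gamma_i$ for the additive game $\gamma(S) = \sum_{j\in S}\Gamma_j$. The only difference is that you explicitly verify this last identity via the constant marginal contribution $\Delta_i\gamma(S) = \Gamma_i$ and the normalization $\sum_{S} w(S) = 1$, whereas the paper simply asserts it; your version is the more complete write-up.
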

\begin{proof}
	First, take note that for all $\Gamma_i \in \mathbb{R}$ and $i \in \mathcal{S}$, we can define the noiseless VF $v''(S) := \gamma(S) = \sum_{j \in S} \Gamma_j$ for all $S \subseteq \mathcal{S}$ with the effect that $\Phi_i(v'') = \Gamma_i$ for all $i \in \mathcal{S}$. Second, as a consequence of \cref{eqn:phi:E:uncertainty:G} and the Linearity property of regular SVs, $\tilde{\Phi}_i(\bm v) = \Phi_i(v) + \Gamma_i = \Phi_i(v) + \Phi_i(v'') = \Phi_i(v + v'')$ for all $i \in \mathcal{S}$. Finally, define $v'(S) := v(S) + v''(S)$ for all $S \subseteq \mathcal{S}$.\proofqed
\end{proof}
Hence, uncertain SVs can in fact be considered as regular SVs with a suitably shifted VF. The shift $\gamma(S)$ corresponds to the cumulated mean bias of the marginal contributions for all players in $S$.\par
As a consequence, the properties of regular SVs (Efficiency, Symmetry, Linearity, Null Player) are also uniquely fulfilled with uncertain SVs. Moreover, using \cref{eqn:phi:E:uncertainty:G}, we can straightforwardly rewrite these properties in a form that highlights the uncertainty representation:
\begin{description}[font=\bfseries, leftmargin=42mm, style=nextline]
	\item[Uncertain Efficiency]
	$\sum_{i=1}^N \tilde{\Phi}_i = v(\mathcal{S}) - v(\varnothing) + \gamma(\mathcal{S})$.
	\item[Uncertain Symmetry]
	If $\forall\, S \subseteq \mathcal{S} \setminus \{i,j\}$, $v(S \cup \{i\}) = v(S \cup \{j\})$, then $\tilde{\Phi}_i=\tilde{\Phi}_j + \Gamma_i - \Gamma_j$.
	\item[Uncertain Linearity]
	For two noisy VFs $\bm v$ and $\bm v'$, $\tilde{\Phi}_i(\bm v) + \tilde{\Phi}_i(\bm v') = \tilde{\Phi}_i(\bm v + \bm v') \,\forall\, i \in \mathcal{S}$. For $\alpha \in \mathbb{R}$, $\alpha \tilde{\Phi}_i(\bm v) = \tilde{\Phi}_i(\alpha \bm v) \,\forall\, i \in \mathcal{S}$.
	\item[Uncertain Null Player]
	If $\forall\, S \subseteq \mathcal{S} \setminus \{i\}$, $v(S \cup \{i\}) = v(S)$, then $\tilde{\Phi}_i = \Gamma_i$.
\end{description}
Due to the linearity of the expectation value, $\Gamma_i(\bm v + \bm v') = \Gamma_i(\bm v) + \Gamma_i(\bm v')$ and $\Gamma_i(\alpha \bm v) = \alpha \Gamma_i(\bm v)$, respectively. We have also made use of $\gamma(\mathcal{S}) = \sum_{i=1}^N \Gamma_i$. In addition to the properties of regular SVs, the properties of uncertain SVs from \cite{gao2017} apply straightforwardly based on \cref{eqn:v:uncertainty,eqn:phi:E:uncertainty:V}. We refer to \cite{gao2017} for a derivation. 

\subsection{Higher Moments}
Higher moments of the random variable $\tilde{\bm{V}}_i$ can be determined similar to the noiseless case, \cref{eqn:phi:moments}. Specifically, the $n$th moment reads
\begin{equation} \label{eqn:phi:moments:uncetainty}
	\mathbb{E} \left[\tilde{\bm{V}}_i^n \right] = \sum_{k=0}^{n} \binom{n}{k} \sum_{S \subseteq \mathcal{S} \setminus \{i\}} w(S) \left[ \Delta_i v (S) \right]^k \mathbb{E} \left[ \bm{\epsilon}_i^{n-k} \ |\ S\right]
\end{equation}
for $n \in \mathbb{N}$, where $\mathbb{E} \left[ \bm{\epsilon}_i^n\ |\ S\right] = \sum_{k=0}^{n} \binom{n}{k} (-1)^{n-k} \mathbb{E} \left[ \boldsymbol\nu^{k}\ |\ S \cup \{i\} \right] \mathbb{E} \left[ \boldsymbol\nu^{n-k}\ |\ S \right]$ follows from \cref{eqn:deltav:q}. Both equalities are direct implications of the binomial theorem. Consequently, any moment of $\tilde{\bm{V}}_i$ can be explicitly calculated via \cref{eqn:phi:moments:uncetainty} based on the corresponding moments of the random variable $\boldsymbol\nu$ --- knowledge about the underlying probability measures is not required.\par
For example, the variance reads
\begin{equation} \label{eqn:phi:V:uncertainty}
	\tilde{\sigma}_{i}^2:= \mathbb{E} \left[\tilde{\bm{V}}_i^2 \right] - \mathbb{E} \left[\tilde{\bm{V}}_i \right]^2  =  \sigma_{i}^2 + \sigma_{\Gamma_i}^2 + \xi_i
\end{equation}
with $ \sigma_{i}^2$ from \cref{eqn:phi:V}, the variance of the noise of the marginal contributions $\sigma_{\Gamma_i}^2 := \mathbb{E} \left[ \bm{\epsilon}_i^2 \right] - \Gamma_i^2$, and the correlation $\xi_i := 2 \sum_{S \subseteq \mathcal{S} \setminus \{i\}} w(S) \Delta_i v (S) \mathbb{E} \left[ \bm{\epsilon}_i\ |\ S \right] - 2 \Phi_i \Gamma_i$.
Hence, we find that the noise from the VF introduces a noise uncertainty $\sigma_{\Gamma_i}^2 \geq 0$ and a correlation term $\xi_i \in \mathbb{R}$ in addition to the intrinsic uncertainty $\sigma_{i}^2 \geq 0$.

\section{Explainable Machine Learning} \label{sec:xml}
When the concept of SVs is used for XML, features (or feature indices, to be more precise) take the role of players and the VF is determined by the model output. Suitable VFs can be realized in many different variants leading to different kinds of SVs \cite{sundararajan2020}. For example, explanations can be performed for a single data point or an entire data set and might or might not require a retraining of the model. Uncertain SVs can be used in analogy to regular SVs to achieve explainability. Depending on the choice of the VF, different kinds of random effects will occur. A detailed discussion of such effects, however, is beyond the scope of this paper.\par
In the present section, we first briefly outline the practical challenge in the evaluation of uncertain SVs and highlight the fundamental difference to the evaluation of regular SVs. Subsequently, we present a simple numerical experiment to demonstrate the effects of randomness of a VF in the context of XML.

\subsection{Evaluation of Uncertain Shapley Values}
The task of calculating regular SVs is NP-hard \cite{deng1994} and requires the evaluation of $2^N$ VFs according to \cref{eqn:phi}. On the other hand, uncertain SVs can by design not be evaluated exactly if no a priori knowledge about the underlying probability distribution is available, \ie, \cref{eqn:deltav:q} is unknown. Instead, they can only be estimated, which requires to evaluate VFs multiple times.\par
To compare the evaluation effort of regular SVs and uncertain SVs, we presume in the following that we explicitly sum over all coalitions in \cref{eqn:phi} and \cref{eqn:phi:E:uncertainty:V}, respectively. For regular Shapey values, this involves $2^N$ VF evaluations. For uncertain SVs, we can repeat each evaluation $n \in \mathbb{N}$ times, which in total requires $n2^N$ VF evaluations. An unbiased estimator for $\tilde{\Phi}_i$ is then given by the sample mean $\overline{\Phi}_i := \sum_{S \subseteq \mathcal{S} \setminus \{i\}} w(S) \left[ \overline{v}(S \cup \{i\}) - \overline{v}(S) \right] $ with $\overline{v}(S) := \frac{1}{n} \sum_{k=1}^n v_k(S)$ based on the \iid VF samples $v_1(S),\dots,v_n(S)$ for all $S \subseteq \mathcal{S}$ and $i \in \mathcal{S}$. As is well-known, a confidence interval of $\overline{\Phi}_i$ for sufficiently many samples reads $\overline{\Phi}_i \pm \propto s_i / \sqrt{n}$ with the corresponding sample variance $s_i^2$.\par
The increased computational complexity in the evaluation of an uncertain SV in comparison with a regular SV ($n2^N$ instead of $2^N$) is the price that has to be paid for an unknown randomness in the VF and its margin depends on the desired confidence of $\overline{\Phi}_i$. A straightforward approximation method for regular SVs is to treat the coalition as a random variable in the sense of \cref{eqn:phi:E} and estimate the expectation value. A similar approach can also be employed with uncertain SVs via \cref{eqn:phi:E:uncertainty:VS}. For an overview over more advanced approximation methods of regular SVs, we refer to \cite{aas2021,touati2021,mitchell2021} and references therein.

\subsection{Numerical Experiment}
In this section, we study our theoretical results in the context of XML. Specifically, we discuss the effects of two different kinds of noises, Bernoulli noise and Gaussian noise, on an exemplary VF (that can in principle be calculated noiselessly) and analyze the resulting uncertain SVs.
To this end, we consider a synthetic data set created from the method \texttt{make\_regression} of the Python library \texttt{scikit-learn} \cite{sklearn2011}.
This method generates a random linear combination of normal distributions.
We sampled $K=\num{10000}$ data points $\mathcal{D}=\lbrace(x^{(i)},y_i)\rbrace_{i=1,\dots,K}$, each point of index $i$ consisting of twelve features $x^{(i)}\in\mathbb{R}^{12}$ and one target value $y_i\in\mathbb{R}$.
All features are informative.
Additionally, we specified an additive Gaussian noise level of $0.1$.
The random seed we used is \num{97531}, which allows for reproducing the data.\par 
Our goal is to study the influence of each feature with respect to the $R^2$ score $R^2(f,\mathcal{D}) := 1- \sum_{i=1}^{K} (y_i-f(x^{(i)}))^2 / \sum_{i=1}^{K} (y_i-\overline{y})^2$ with $\overline{y}:=\frac{1}{K}\sum_i y_i$, which can be used to quantify the quality of a regression model $f$, where $f(x^{(i)})$ describes the prediction of the model for the data point $x^{(i)}$.
The $R^2$ score describes the proportional amount of variation in $y$ that can be predicted from $x$ and takes values in $(-\infty,1]$, with $1$ indicating a perfect match between model and data.
Based on this score, we can define the VF $v(S) = v(S;f,\mathcal{D}) := R^2(f,\{(x_{S|0}^{(1)},y_1),\dots,(x_{S|0}^{(K)},y_K)\})$, where feature values that are not in the coalition $S$ are set to $0$, denoted by $x_{S|0}$ \cite{gromping2007}. As our model of choice, we fit a linear regression with intercept term to the data, \ie, $f_{\theta,b}(x) := \theta^{\top} x +b$.\par
To obtain two use cases, we define two noisy versions of our VF $v(S;f,\mathcal{D})$ by adding random variables that are independent of $S$. First, $\bm{\nu}$ follows a Bernoulli distribution with $p=0.33$ multiplied with the constant $c=0.05$, which has the effect of randomly adding a constant offset to some values. Second, $\bm{\nu}'$ follows a normal distribution with mean $0$ and standard deviation $0.01$. We arrive at three VFs: $v$ (noiseless), $\bm{v}:=v+\bm{\nu}$ (Bernoulli noise), and $\bm{v}':=v+\bm{\nu}'$ (Gaussian noise).\par 
We plot in \cref{fig:pv} the probability mass function $\mathbb{P}_i$ and densities $\tilde{p}_i$ according to \cref{eqn:deltav:p,eqn:deltav:p:uncertainty}, respectively, for all $i\in \mathcal{S} = \lbrace 1,\dots,12\rbrace$. For the Bernoulli noise, the support is discrete and we therefore denote the corresponding probability mass function by $\tilde{\mathbb{P}}_i$. The probability mass function $\mathbb{P}_i$ shown in \cref{fig:pv_noisefree} is the noiseless distribution of $\Delta_iv(S)$ over all coalitions $S$, weighted with their respective $w(S)$ for each feature $i$ as defined in \cref{eqn:deltav:p}.
Its expectation value corresponds to $\Phi_i$ according to \cref{eqn:phi:E}, which is listed in \cref{tab:shapley} for all $i \in \mathcal{S}$.\par
\begin{figure}[t]
	\centering
	\begin{subfigure}[b]{0.32\textwidth}
		\centering
		\includegraphics[width=\textwidth]{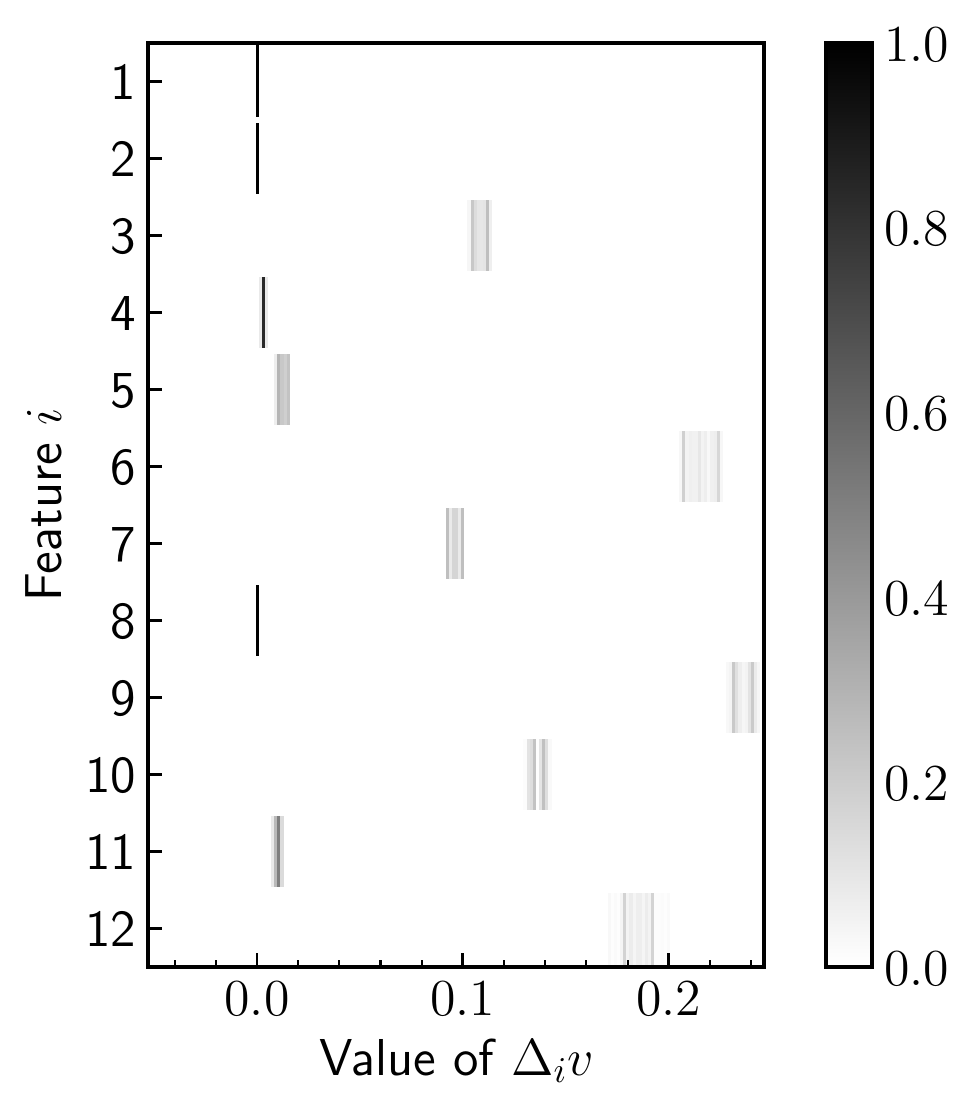}
		\caption{$\BP_i$ for $v$ (noiseless)}
		\label{fig:pv_noisefree}	
	\end{subfigure}%
	\hfill%
	\begin{subfigure}[b]{0.32\textwidth}
		\centering
		\includegraphics[width=\textwidth]{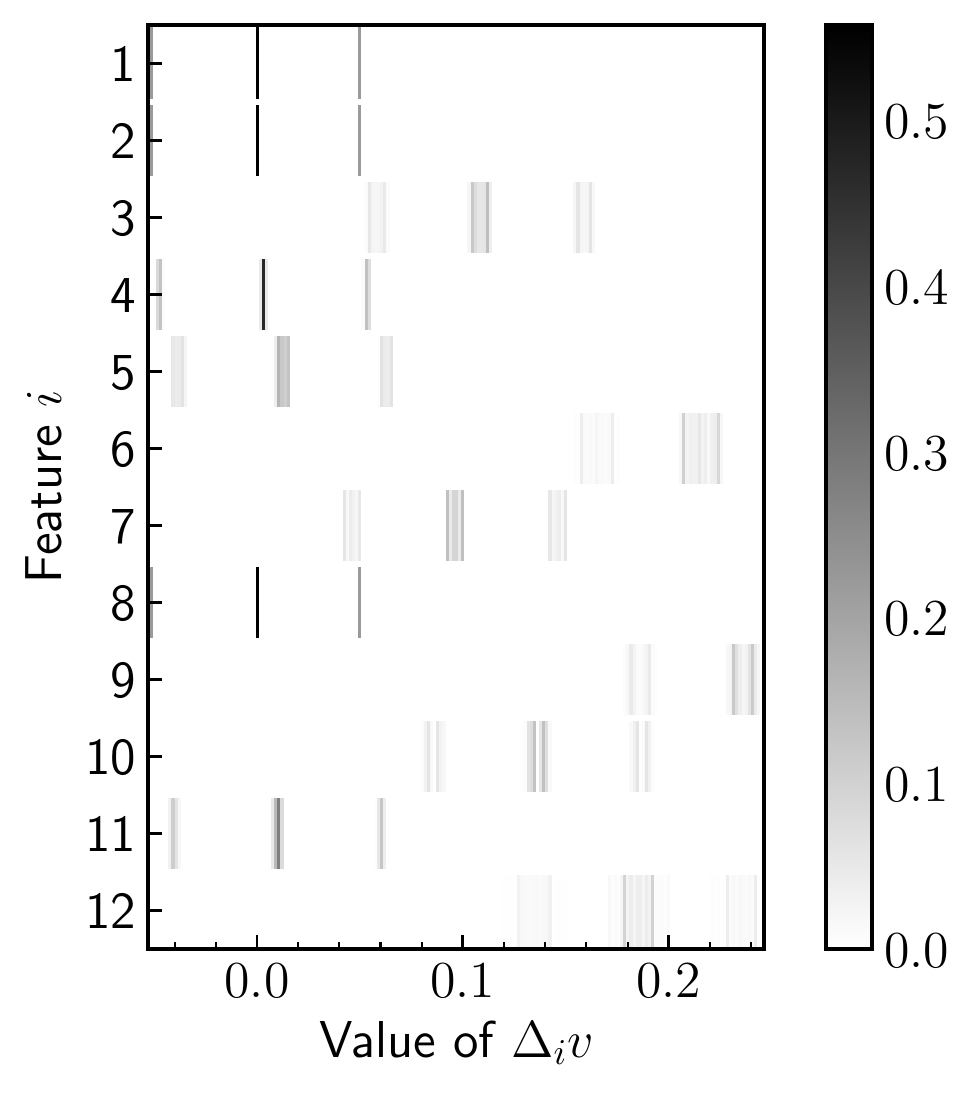}
		\caption{$\tilde{\mathbb{P}}_i$ for $\bm{v}$ (Bernoulli noise)}
		\label{fig:pv_bernoulli}
	\end{subfigure}%
	\hfill%
	\begin{subfigure}[b]{0.32\textwidth}
		\centering
		\vstretch{1.03}{\includegraphics[width=\textwidth]{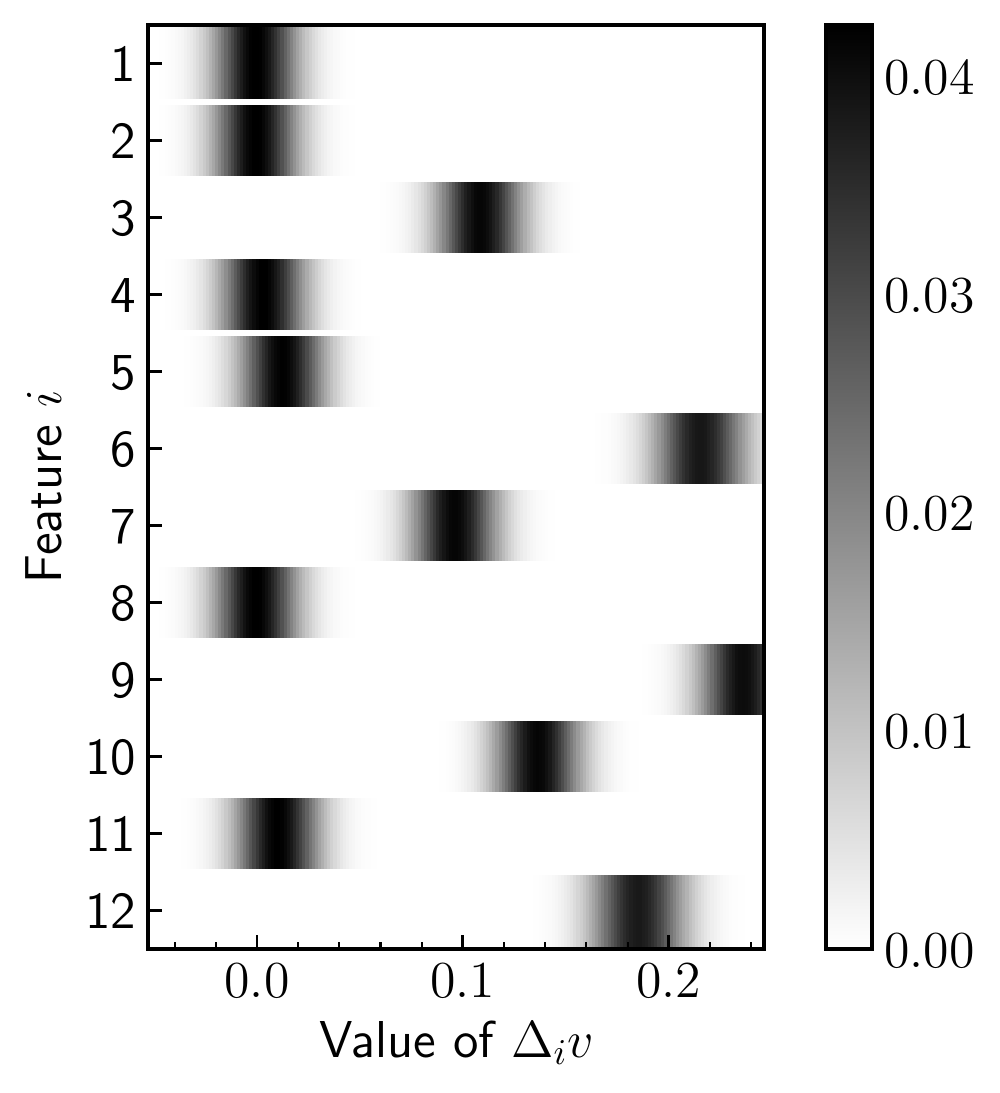}}
		\caption{$\tilde{p}_i$ for $\bm{v}'$ (Gaussian noise)}
		\label{fig:pv_gaussian}
	\end{subfigure}
	\caption{Probability mass functions $\mathbb{P}_i$ and $\tilde{\mathbb{P}}_i$ and probability density $\tilde{p}_i$ according to \cref{eqn:deltav:p,eqn:deltav:p:uncertainty}, respectively, for all features $i\in\lbrace 1,\dots,12\rbrace$.
	Darker colors indicate a higher probability.
	We investigate the effect of additive Bernoulli and Gaussian noise, respectively, on a noiseless VF.}
	\label{fig:pv}
\end{figure}
\begin{table}[t]
	\centering
	\caption{Regular SVs (without uncertainty) and their intrinsic variances \wrt $w(S)$.}
	\setlength{\tabcolsep}{10pt}
	\renewcommand{\arraystretch}{1.15}
	\begin{tabular}{lcc}
		\toprule
		Feature $i$ &  $\Phi_i$ &  $\sigma_i^2$ \\
		\midrule
		$1$  & $-0.000050$ &  $3.081014\times 10^{-9}$ \\
		$2$  & $-0.000009$ &  $1.126325\times 10^{-9}$ \\
		$3$  &  $0.108173$ &  $1.008858\times 10^{-5}$ \\
		$4$  &  $0.003419$ &  $3.832217\times 10^{-7}$ \\
		$5$  &  $0.012733$ &  $3.634072\times 10^{-6}$ \\
		$6$  &  $0.214356$ &  $3.949685\times 10^{-5}$ \\
		$7$  &  $0.095895$ &  $8.100267\times 10^{-6}$ \\
		$8$  &  $0.000009$ &  $2.346858\times 10^{-10}$ \\
		$9$  &  $0.235202$ &  $1.988726\times 10^{-5}$ \\
		$10$ &  $0.135854$ &  $9.707781\times 10^{-6}$ \\
		$11$ &  $0.010408$ &  $1.179264\times 10^{-6}$ \\
		$12$ &  $0.184340$ &  $4.631823\times 10^{-5}$ \\
		\bottomrule
	\end{tabular}
	\label{tab:shapley}
\end{table}
From \cref{eqn:phi:V:uncertainty}, we can derive the variance of all uncertain SVs:
The correlation terms $\xi_i$ are all equal to zero, as $h_i$ is independent of $S$.
The intrinsic variances $\sigma_i^2$ that result from the distribution of the marginal contributions $\Delta_iv$ according to \cref{eqn:phi:V} are listed in \cref{tab:shapley}.
The Bernoulli distribution of $\bm{\nu}$ leads to a symmetrical noise function $h_i$ with mean $0$ and support on $\lbrace -c,0,+c\rbrace$, where both non-zero values occur with probability $p(1-p)$.
The distribution of $\bm{\nu}'$, on the other hand, leads to a Gaussian $h_i$ with twice the original variance.
Thus, the variance of the noise of the marginal contributions is given by $\sigma^2_{\Gamma_i}=c^2\cdot 2p(1-p)=1.1055\times 10^{-3}$ for the Bernoulli noise and $2\cdot 0.01^2=2\times 10^{-4}$ for the Gaussian noise, respectively.
According to \cref{eqn:phi:V:uncertainty}, these values are added to the corresponding $\sigma^2_i$ from \cref{tab:shapley} to obtain the final variance.
Notice that the variance incurred from the Gaussian or Bernoulli noise term is up to 6 orders of magnitude larger than the variances reported in the table.
Hence, the final variance, and consequently the number of samples one has to draw to obtain a specific confidence region is dominated by the noise.
Such insights cannot be derived with existing frameworks with uncertain SVs.

\section{Conclusions} \label{sec:summary}
In this paper, we have proposed uncertain SVs as a generalization of regular SVs with VFs that are represented by random variables instead of deterministic functions.
With this approach, we can consider uncertainties in the evaluation of VFs, \eg, due to a non-deterministic behavior of ML algorithms.
Based on our definition in form of expectation values, we have found that uncertain SVs correspond to regular SVs with shifted VFs, where the shift is determined by the mean bias of the marginal contributions.
If no such uncertainty is present (\eg, because the noise of the VF is independent of the corresponding coalition), both kinds of SVs coincide.\par 
The practical evaluation of uncertain SVs can (without a priori knowledge about the uncertainty) only be realized by a sample mean that requires repeated VF evaluations.
Thus, a key difference between uncertain SVs and regular SVs is the higher computational effort required to achieve a desired level of confidence due to the VF uncertainty.\par
We consider our work as a solid mathematical framework based on first principles that is highly general and can be used as a potential starting point for further research.
For example, with respect to different kinds of noises, \eg, multiplicative noise or noise that is correlated for different coalitions.
Another possible research direction is the investigation of uncertainty that arises from typical VFs in an XML context.
Furthermore, we think that a study of the effects of uncertain VFs on specialized SV approximations such as \emph{KernelSHAP} \cite{lundberg2017}, \emph{TreeSHAP} \cite{lundberg2020} or \emph{DeepSHAP} is potentially insightful.
It also remains an open question how our uncertainty treatment can be translated to related concepts like Owen values \cite{lopez2009} or Banzhaf-Owen values \cite{saavedranieves2021}.
Finally, quantum machine learning \cite{cerezo2022} is a particularly promising field of application beyond XML because randomness is an inherent property of quantum systems.

\subsubsection{Acknowledgments}
The authors would like to thank Sabine Müller and Moritz Wolter for helpful discussions and constructive feedback.
Parts of this research have been funded by the Federal Ministry of Education and Research of Germany and the state of North-Rhine Westphalia as part of the Lamarr-Institute for Machine Learning and Artificial Intelligence (LAMARR22B), as well as by the Fraunhofer Cluster of Excellence Cognitive Internet Technologies (CCIT) and by the Fraunhofer Research Center Machine Learning (FZML).

\bibliography{literature} 

\end{document}